\documentclass{article}

% if you need to pass options to natbib, use, e.g.:
\PassOptionsToPackage{numbers, compress}{natbib}
% before loading nips_2017
%
% to avoid loading the natbib package, add option nonatbib:
% \usepackage[nonatbib]{nips_2017}

%\usepackage{nips_2017}

% to compile a camera-ready version, add the [final] option, e.g.:
% \usepackage[final]{nips_2017}
\usepackage{iclr2018_workshop,times}

\usepackage[utf8]{inputenc} % allow utf-8 input
\usepackage[T1]{fontenc}    % use 8-bit T1 fonts
\usepackage{hyperref}       % hyperlinks
\usepackage{url}            % simple URL typesetting
\usepackage{booktabs}       % professional-quality tables
\usepackage{amsfonts}       % blackboard math symbols
\usepackage{nicefrac}       % compact symbols for 1/2, etc.
\usepackage{microtype}      % microtypography
\usepackage{amsthm,amsmath}		
\usepackage{graphicx}
%\usepackage{algorithm,algorithmic}

%\iclrfinalcopy
\title{Shifting Mean Activation Towards Zero with Bipolar Activation Functions}

% The \author macro works with any number of authors. There are two
% commands used to separate the names and addresses of multiple
% authors: \And and \AND.
%
% Using \And between authors leaves it to LaTeX to determine where to
% break the lines. Using \AND forces a line break at that point. So,
% if LaTeX puts 3 of 4 authors names on the first line, and the last
% on the second line, try using \AND instead of \And before the third
% author name.

\author{Lars H.~Eidnes\\
  Trondheim\\
  Norway\\
  \texttt{larseidnes@gmail.com} \\
  %% examples of more authors
  \And
  Arild Nøkland \\
  Trondheim \\
  Norway \\
  \texttt{arild.nokland@gmail.com} \\
  %% \AND
  %% Coauthor \\
  %% Affiliation \\
  %% Address \\
  %% \texttt{email} \\
  %% \And
  %% Coauthor \\
  %% Affiliation \\
  %% Address \\
  %% \texttt{email} \\
  %% \And
  %% Coauthor \\
  %% Affiliation \\
  %% Address \\
  %% \texttt{email} \\
}

\newtheorem{theorem}{Theorem}

\begin{document}
% \nipsfinalcopy is no longer used

\maketitle

\begin{abstract}

We propose a simple extension to the ReLU-family of activation functions that allows them to shift the mean activation across a layer towards zero. Combined with proper weight initialization, this alleviates the need for normalization layers. We explore the training of deep vanilla recurrent neural networks (RNNs) with up to 144 layers, and show that bipolar activation functions help learning in this setting. On the Penn Treebank and Text8 language modeling tasks we obtain competitive results, improving on the best reported results for non-gated networks. In experiments with convolutional neural networks without batch normalization, we find that bipolar activations produce a faster drop in training error, and results in a lower test error on the CIFAR-10 classification task. \footnote{Code is available at \url{https://github.com/larspars/word-rnn} and \url{https://github.com/anokland/resnet-brelu} } 

\end{abstract}

\section{Introduction}

Recurrent neural networks (RNN) are able to model complex dynamical systems, but are known to be hard to train \citep{PascanuMB13}. One reason for this is the vanishing or exploding gradient problem \citep{BengioSF94}. Gated RNNs like the Long Short-Term Memory (LSTM) of \citet{hochreiter1997long} alleviate this problem, and are widely used for this reason. However, with proper initialization, non-gated RNNs with Rectified Linear Units (ReLU) can also achieve competitive results \citep{LeJH15}. 

The choice of activation function has strong implications for the learning dynamics of neural networks. It has long been known that having zero-centered inputs to a layer leads to faster convergence times when training neural networks with gradient descent \citep{lecun1991}. When inputs to a layer have a mean that is shifted from zero, there will be a corresponding bias to the direction of the weight updates, which slows down learning \citep{lecun1998}. \citet{ClevertUH15} showed that a mean shift away from zero introduces a bias shift for units in the next layer, and that removing this shift by zero-centering activations brings the standard gradient closer to the natural gradient \citep{Amari98}.

The Rectified Linear Unit (ReLU) \citep{NairH10, GlorotBB11} is defined as $f(x) = max(x, 0)$ and has seen great success in the training of deep networks. Because it has a derivative of 1 for positive values, it can preserve the magnitude of the error signal where sigmoidal activation functions would diminish it, thus to some extent alleviating the vanishing gradient problem. However, since it is non-negative it has a mean activation that is greater than zero. 

Several extensions to the ReLU have been proposed that replace its zero-valued part with negative values, thus allowing the mean activation to be closer to zero. The Leaky ReLU (LReLU) \citep{maas2013rectifier} replaces negative inputs with values that are scaled by some factor in the interval $[0,1]$. In the Parametric Leaky ReLU (PReLU) \citep{HeZRS15} this scaling factor is learned during training. Randomized Leaky ReLUs (RReLU) \citep{XuWCL15} randomly sample the scale for the negative inputs. Exponential Linear Units (ELU) \citep{ClevertUH15} replace the negative part with a smooth curve which saturates to some negative value.

Concurrent to our work, \citet{KlambauerUMH17} proposed the Scaled ELU (SELU) activation function, which also has self-normalizing properties, although it takes an orthogonal and complementary approach to the one proposed here.

\citet{ChernodubN16} proposed the Orthogonal Permutation Linear Unit (OPLU), where every unit belongs to a pair $\{x_i, x_j\}$, and the activation function simply sorts this pair:

\begin{equation}
\begin{pmatrix}
z_i\\ 
z_j
\end{pmatrix} = 
\begin{pmatrix}
\max(x_i, x_j)\\ 
\min(x_i, x_j)
\end{pmatrix}
\end{equation}

This function has many desirable properties: It is norm and mean preserving, and has no diminishing effect on the gradient. 

The Concatenated Rectified Linear Unit (CReLU) \citep{ShangSAL16} concatenates the ReLU function applied to the positive and negated input $f'(x) = (f(x), f(-x))$. \citet{BalduzziFLLMM17} combined the CReLU with a mirrored weight initialization $W_1f(x) - W_2f(-x)$, with $W_1 = W_2$ at initialization. The resulting function is initially linear, and thus mean preserving, before training starts.

Another approach to maintain a mean of zero across a layer is to explicitly normalize the activations. An early example was \citet{lecun1991}, who suggested zero-centering activations by subtracting each units mean activation before passing to the next layer. In \citet{GlorotB10} it was shown  that the problem of vanishing gradients in deep models can be mitigated by having unit variance in layer activations. Batch Normalization \citep{IoffeS15} normalizes both the mean and variance across a mini-batch. The success of Batch Normalization for deep feed forward networks created a research interest in how similar normalization of mean and variance can be extended to RNNs. Despite early negative results \citep{LaurentPBZB16}, by keeping separate statistics per timestep and properly initializing parameters, Batch Normalization can be applied to the recurrent setting \citep{CooijmansBLC16}. Other approaches to hidden state normalization include Layer Normalization \citep{BaKH16}, Weight Normalization \citep{NIPS2016_6114} and Norm Stabilization \citep{KruegerM15}. 

The Layer-Sequential Unit Variance (LSUV) algorithm \citep{MishkinM15} iteratively initializes each layer in a network such that each layer has unit variance output. If such a network can maintain approximately unit variance throughout training, it is an attractive option because it has no runtime overhead.

In this paper, we propose bipolar activation functions as a way to keep the layer activations approximately zero-centered. We explore the training of deep recurrent and feed forward networks, with LSUV-initialization and bipolar activations, without using explicit normalization layers like batch norm.

\section{Bipolar Activation Functions}

\begin{figure}[h]
    \centering
	\includegraphics[width=1.0\textwidth]{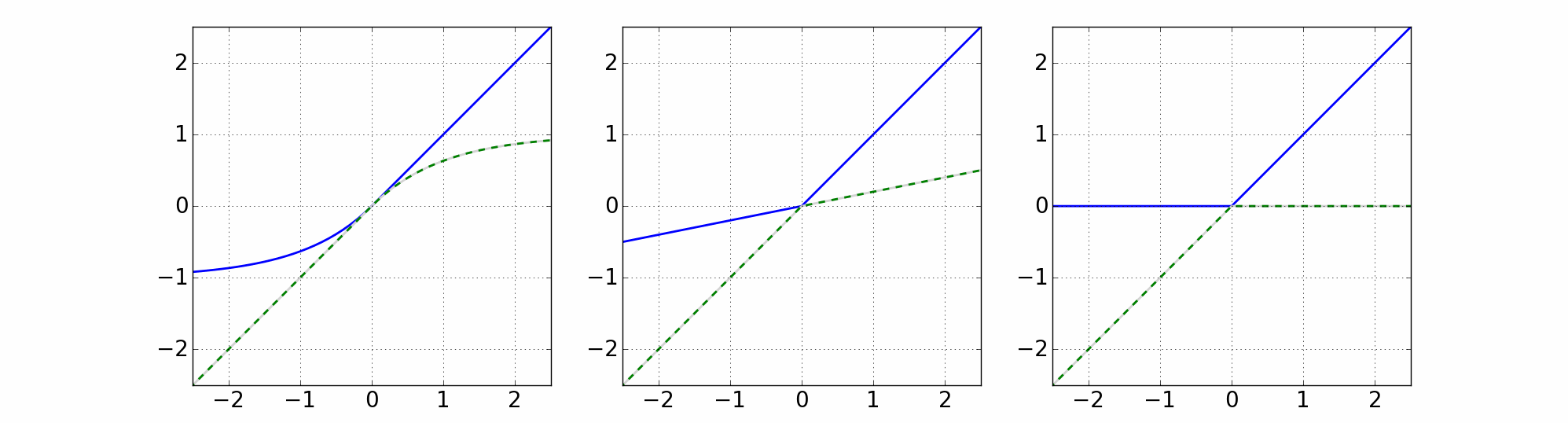}
  	\caption{Bipolar versions of popular activation functions. From left: Bipolar ELU, Bipolar Leaky ReLU, Bipolar ReLU.}
  	\label{fig:activations}
\end{figure}

In a neural network, the ReLU only preserves its positive inputs, and thus shifts the mean activation in a positive direction. However, if we for every other neuron preserve the negative inputs, this effect will be canceled for zero-centered i.i.d. input vectors. In general, for any ReLU-family activation function $f$, we can define its bipolar version as follows:

\begin{equation}
f_B(x_i) = \left\{\begin{matrix} f(x_i),  & \text{if } i\bmod 2 = 0 \\ -f(-x_i), & \text{if } i\bmod 2 \neq 0 \end{matrix}\right.
\end{equation}

For convolutional layers, we flip the activation function in half of the feature maps.

\begin{theorem}
\label{theorem_brelu}
For a layer of bipolar ReLU units, this trick will ensure that a zero-centered i.i.d. input vector x will give a zero-centered output vector. If the input vector has a mean different from zero, the output mean will be shifted towards zero.
\end{theorem}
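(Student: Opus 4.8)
The plan is to reduce the claim to the elementary pointwise identity $\max(x,0)+\min(x,0)=x$, valid for every real $x$, combined with linearity of expectation over the i.i.d.\ coordinates. First I would unfold the definition of the bipolar ReLU: writing $f(x)=\max(x,0)$, the even coordinates are mapped by $x_i\mapsto f(x_i)=\max(x_i,0)$ and the odd coordinates by $x_i\mapsto -f(-x_i)=-\max(-x_i,0)=\min(x_i,0)$. So a bipolar-ReLU layer keeps the positive part of half its inputs and the negative part of the other half.

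Next I would examine the mean activation of the output vector $y$, namely $\bar y = \tfrac1n\sum_{i=1}^n f_B(x_i)$, and take its expectation over the i.i.d.\ input. Assuming for cleanliness that the layer width $n$ is even (an odd width only adds an $O(1/n)$ correction from the single unpaired unit), exactly $n/2$ coordinates are even and $n/2$ are odd, so by linearity and the i.i.d.\ assumption
\begin{equation}
\mathbb{E}[\bar y] \;=\; \tfrac12\bigl(\mathbb{E}[\max(x,0)] + \mathbb{E}[\min(x,0)]\bigr) \;=\; \tfrac12\,\mathbb{E}[\max(x,0)+\min(x,0)] \;=\; \tfrac12\,\mathbb{E}[x] \;=\; \tfrac{\mu}{2},
\end{equation}
where $\mu$ is the common mean of the input coordinates. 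If $\mu=0$ this gives $\mathbb{E}[\bar y]=0$, i.e.\ the output is zero-centered; if $\mu\neq0$ then $\mathbb{E}[\bar y]=\mu/2$ has the same sign as $\mu$ but strictly smaller magnitude, i.e.\ the mean is pulled toward zero (in fact exactly halved). I would additionally note that one can invoke the law of large numbers so that $\bar y$ itself, not merely its expectation, concentrates around $\mu/2$ as $n$ grows, which is the sense in which a given input vector ``gives'' a near zero-centered output.

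The only real subtlety — and the point I would state explicitly — is interpretational rather than technical: the conclusion is a statement about the expected mean activation (equivalently an asymptotic statement in the layer width, powered by the i.i.d.\ assumption), not a deterministic identity for a finite realization. Everything else is immediate once the identity $\max(x,0)+\min(x,0)=x$ is in hand. I would also remark that this exact cancellation is special to ReLU: for a general ReLU-family $f$ the analogous quantity is $\tfrac12\,\mathbb{E}[f(x)-f(-x)]$, and recovering the ``towards zero'' conclusion there would require an extra antisymmetry/monotonicity argument, so I would keep the proof specialized to ReLU as the theorem asks.
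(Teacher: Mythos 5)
Your proposal is correct and follows essentially the same route as the paper's own proof: split the layer into the ordinary and flipped halves, use the identity $\max(x,0)+\min(x,0)=x$ together with the i.i.d.\ assumption, and conclude $\mathbb{E}[f_B(x)]=\tfrac{1}{2}\mathbb{E}[x]$, so a zero mean is preserved and a nonzero mean is halved. Your added remarks on even layer width, concentration, and the expectation-level interpretation are sensible refinements but do not change the argument.
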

\begin{proof}
Let $x_1$ and $x_2$ be the input vectors to the ordinary units and the flipped units respectively. The output vectors from the two populations are $f_B(x_1)=max(0,x_1)$ and $f_B(x_2) = -f(-x_2) =min(0,x_2)$. Since $x$ is i.i.d. then $x$, $x_1$ and $x_2$ have the same distribution, and they have the same expectation value $\mathbb{E}[x]=\mathbb{E}[x_1]=\mathbb{E}[x_2]$. Then the expectation value of the output can be simplified 

$\mathbb{E}[f_B(x)]=0.5\mathbb{E}[f_B(x_1) + f_B(x_2)] = 0.5\mathbb{E}[max(0,x_1) + min(0,x_2)]= 0.5\mathbb{E}[max(0,x) + min(0,x)] = 0.5\mathbb{E}[x]$. 
\end{proof}

\begin{theorem}
\label{theorem_belu}
For a layer of bipolar ELU units, this trick will ensure that a i.i.d. input vector will give an output mean that is shifted towards a point in the interval $[-\alpha,\alpha]$, where $\alpha$ is the parameter defining the negative saturation value of the ELU.
\end{theorem}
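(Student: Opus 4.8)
The plan is to run the same argument as in the proof of Theorem~\ref{theorem_brelu}, simply replacing the ReLU identities with their ELU counterparts. Let $x_1$ and $x_2$ be the inputs to the ordinary and flipped units, so the corresponding outputs are $f(x_1)$ and $-f(-x_2)$, where $f(x)=x$ for $x\ge 0$ and $f(x)=\alpha(e^{x}-1)$ for $x<0$. Since $x$ is i.i.d., the variables $x$, $x_1$, $x_2$ all share one distribution, so by linearity of expectation
\[
\mathbb{E}[f_B(x)]=\tfrac12\big(\mathbb{E}[f(x_1)]-\mathbb{E}[f(-x_2)]\big)=\tfrac12\,\mathbb{E}\big[f(x)-f(-x)\big].
\]
Thus the whole claim reduces to analysing the one-variable function $g(x):=f(x)-f(-x)$.

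The next step is to evaluate $g$ in closed form. Splitting into the cases $x\ge 0$ and $x<0$ and using that exactly one of $x,-x$ is nonnegative, both cases collapse to $g(x)=x+\alpha\,\mathrm{sgn}(x)\,(1-e^{-|x|})$. The key observation is then that the correction term $g(x)-x=\alpha\,\mathrm{sgn}(x)(1-e^{-|x|})$ always lies in the open interval $(-\alpha,\alpha)$, because $1-e^{-|x|}\in[0,1)$ for every $x$.

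Taking expectations and writing $p:=\mathbb{E}[g(x)]-\mathbb{E}[x]=\mathbb{E}\big[\alpha\,\mathrm{sgn}(x)(1-e^{-|x|})\big]$, the bound above gives $p\in(-\alpha,\alpha)\subseteq[-\alpha,\alpha]$, and the displayed identity rewrites as $\mathbb{E}[f_B(x)]=\tfrac12\big(\mathbb{E}[x]+p\big)$. In words, the output mean is exactly the midpoint between the input mean $\mathbb{E}[x]$ and the fixed point $p$, i.e.\ it has been pulled halfway toward $p$; this is the precise meaning of being ``shifted towards a point in $[-\alpha,\alpha]$'', and it specializes to the Theorem~\ref{theorem_brelu} statement when the negative branch of $f$ is flat ($p=0$).

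I do not expect a real obstacle: the argument is a short computation. The two points that need care are (i) organising the piecewise definition of the ELU so the case split genuinely collapses to the single formula for $g$, and (ii) pinning down what ``shifted towards a point'' should mean, since $p$ depends on the input distribution --- the midpoint identity above is what makes this unambiguous. It is also worth noting that for degenerate or one-sided input distributions $p$ can approach $\pm\alpha$ without attaining it, which is why the closed interval in the statement is the correct, slightly conservative, bound.
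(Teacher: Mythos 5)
Your proposal is correct and follows essentially the same route as the paper: average the ordinary and flipped halves under the i.i.d.\ assumption to get $\mathbb{E}[f_B(x)]=\tfrac12\mathbb{E}[x+z]$ with the correction term bounded by $\alpha$, concluding that the output mean is pulled toward a point in $[-\alpha,\alpha]$. The only difference is bookkeeping --- you collapse the paper's four-population split (positive/negative parts with fraction $\beta$) into the single pointwise identity $f(x)-f(-x)=x+\alpha\,\mathrm{sgn}(x)(1-e^{-|x|})$, which is a slightly cleaner way of obtaining the same bounded shift $p=\mathbb{E}[z]$.
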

\begin{proof}
Let $x_1$ and $x_2$ be the input vectors to the ordinary units and the flipped units respectively. Let $x_1^+$ and $x_2^+$ be the vectors of positive values in the two populations, and let $x_1^-$ and $x_2^-$ be the negative values. The output vectors from the four populations are $f_B(x_1^+)=x_1^+$, $f_B(x_1^-)=\alpha(e^{x_1^-}-1)$, $f_B(x_2^+)=\alpha(1-e^{-x_2^+})$ and $f_B(x_2^-)=x_2^-$. Since $x$ is i.i.d. then $x$, $x_1$ and $x_2$ have the same distribution, and they have the same expectation value $\mathbb{E}[x]=\mathbb{E}[x_1]=\mathbb{E}[x_2]$. We also have that $\mathbb{E}[x_1^+] = \mathbb{E}[x_2^+]$ and $\mathbb{E}[x_1^-] = \mathbb{E}[x_2^-]$. 

For some fraction of positive values $\beta \in [0,1]$ we can write 

$\mathbb{E}[f_B(x_1)] = \beta\mathbb{E}[f_B(x_1^+)] + (1-\beta)\mathbb{E}[f_B(x_1^-)] = \mathbb{E}[\beta x_1^+ + (1-\beta)\alpha(e^{x_1^-}-1)]$ and 

$\mathbb{E}[f_B(x_2)] = \beta\mathbb{E}[f_B(x_2^+)] + (1-\beta)\mathbb{E}[f_B(x_2^-)] = \mathbb{E}[\beta \alpha(1-e^{-x_2^+}) + (1-\beta)x_2^-]$. 

Then the expectation value of the output can be simplified 

$\mathbb{E}[f_B(x)]=0.5\mathbb{E}[f_B(x_1) + f_B(x_2)]=$

$0.5\mathbb{E}[\beta x_1^+ + (1-\beta)\alpha(e^{x_1^-}-1) + \beta \alpha(1-e^{-x_2^+}) + (1-\beta)x_2^-] = $

$0.5\mathbb{E}[x + (1-\beta)\alpha(e^{x_1^-}-1) + \beta \alpha(1-e^{-x_2^+})] = 0.5\mathbb{E}[x + z]$. 

We can see that $z$ is bounded in the interval $[-\alpha,\alpha]$. If $\mathbb{E}[x]$ is different from $\mathbb{E}[z]$, then the output mean $\mathbb{E}[f_B(x)]$ will be shifted towards the point $\mathbb{E}[z]$ inside the interval $[-\alpha,\alpha]$.
\end{proof}

Theorem~\ref{theorem_brelu} says that for bipolar ReLU, an input vector $x$ that is not zero-centered, the mean will be pushed towards zero. Theorem~\ref{theorem_belu} says that for bipolar ELU, an input vector $x$ will be pushed towards a value in the interval $[-\alpha,\alpha]$. These properties have a stabilizing effect on the activations. 

Figure~\ref{fig:means_and_vars} shows the evolution of the dynamical system $x_{i+1} = f(Wx_i)$ for different activation functions $f$. As can be seen, the bipolar activation functions have more stable dynamics, less prone to exhibiting an exploding mean and variance.

\begin{figure}[h]
    \centering
	\includegraphics[width=1.0\textwidth]{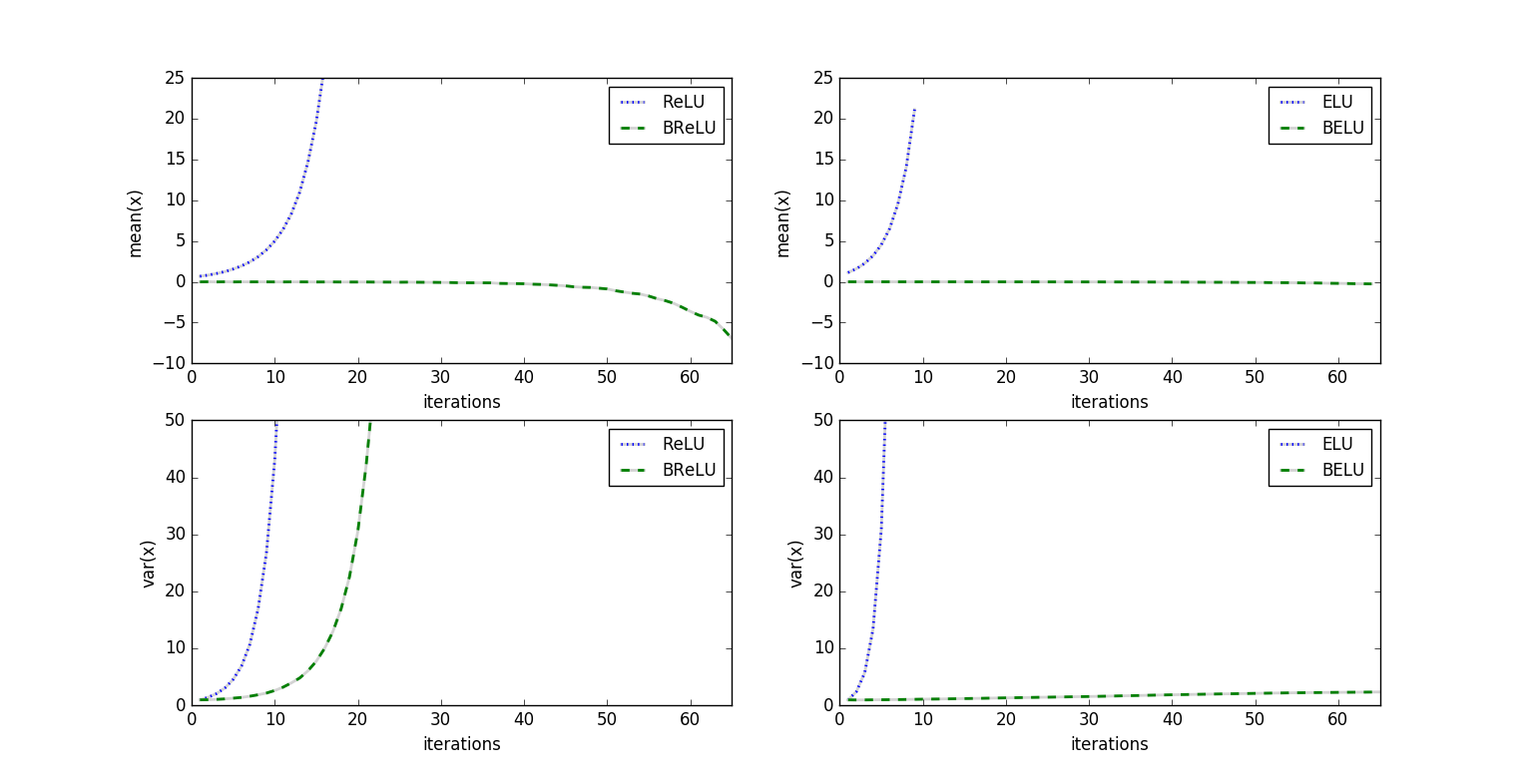}
  	\caption{Iterations of $x_{i+1} = f(Wx_i)$ for different activation functions $f$ with $x_1 \sim \mathcal{N}(0,1)$ and $W \sim \mathcal{N}(0, \sigma^2)$, with $\sigma$ set by the LSUV procedure such that $f(Wx_1)$ has approximately unit variance. The graphs show the mean and variance of $x_i$, averaged over 50 separate runs, where a different $x_1$ and $W$ was sampled each run. }
  	\label{fig:means_and_vars}
\end{figure}

\section{Deeply Stacked RNNs}

Motivated by the success of very deep models in other domains, we investigate the training of deeply stacked RNN models. As we shall see, certain problems arise that are unique to the recurrent setting. The network architecture we consider consists of stacks of vanilla RNN units \citep{elman1990finding}, with the recurrent update equation for layer $i$:
\begin{equation}
h_i(t) = f(W_i h_i(t-1) + U_i x_i(t) + b_i)
\end{equation}

For the first layer, we encode the input as a fixed embedding vector $x_1(t) \sim \mathcal{N}(0,1)$. Subsequent layers are fed the output of the layers below, $x_i(t) = h_{i-1}(t)$. 

If each layer in a neural network scales its input by a factor $k$, the scale at layer L will be $k^L$. For $k\neq1$ this leads to exponentially exploding or vanishing activation magnitudes in deep networks. Notice that this phenomenon holds for any path through the computational graph of an unrolled RNN, both for the within-timestep activation magnitudes across layers, and for the within-layer activation magnitudes across timesteps.

In order to avoid exploding or vanishing dynamics, we want to have unit variance on $h_i(t)$. To this end, we adapt the LSUV weight initialization procedure \citep{MishkinM15} to the recurrent setting by considering a single timestep of the RNN, and setting the input from the recurrent connections to be $\sim \mathcal{N}(0,1)$. The LSUV procedure is then simply to go through each layer sequentially, adjusting the magnitude of $W_i$ and $U_i$ to produce an output $h_i(t)$ with unit variance, while propagating new activations through the network after each weight adjustment.

Since $W_i$ and $U_i$ have the same magnitude at the start of the initialization procedure and are scaled in synchrony, they will have the same magnitude as each other when the initialization procedure is complete. This means that the input-to-hidden connections $U_i$ and the hidden-to-hidden connections $W_i$ contribute equal parts to the unit variance of $h_i(t)$, and thus that the gradient flows in equal magnitude across the horizontal and vertical connections. \footnote{After the LSUV initialization, it is possible to rescale $W_i$ and $U_i$ in order to explicitly trade off the extent to which the gradient should flow along each direction. By choosing a $\gamma \in [0,1]$, we can trade off what portion of the variance each matrix contributes, while maintaining the variance of the output of the layer:

\[
W'_i = W_i\sqrt{2\gamma}
\qquad\text{and}\qquad
U'_i = U_i\sqrt{2\cdot(1-\gamma)}
\]

We note that this seems like a potent approach for influencing the time horizon at which an RNN should learn, but do not explore it further in this work. In our experiments $W_i$ and $U_i$ have equal magnitude at initialization (i.e. $\gamma = 0.5$).} 

While LSUV initialization allows training to work in deeper stacks of RNN layers, even with LSUV we get into trouble when the stacks get deep enough (see Appendix \ref{appendix:gradients}). Visualizing the gradient flow reveals that while the gradient does flow from first layer to the last, it takes a diagonal path backwards through time. The effect is that the initial learning in layer $N$ is most strongly influenced by the input $N$ timesteps in the past.

We have included a brief discussion and visualization of this problem in the appendix, where it can be seen that this problem is remedied if we for every 4th layer $i=4,8,12,16...$ add a skip connection:

\begin{equation}
h_i(t) = f(W_i h_i(t-1) + U_i x_i(t) + b_i) + \alpha h_{i-4}(t)
\end{equation}

The use of skip connections has been shown previously to aid learning in deeply stacked RNNs \citep{graves2013generating}. Note that because of the LSUV init, we know that input from the skip connection $h_{i-4}(t)$ has approximately unit variance. Because of this, the initialization procedure could scale $W_i$ and $U_i$ down to zero, and still have unit variance $h_i(t)$. To avoid this effect, we scale down the skip connection slightly, setting $\alpha=0.99$.

\section{Experiments}

\subsection{Character-level Penn Treebank}

We train character level RNN language models on Penn Treebank \citep{marcus1993Penn}, a 6MB text corpus with a vocabulary of 54 characters. Because of the small size of the dataset, proper regularization is key to getting good performance.

The RNNs we consider follow the architecture described in the previous section: Stacks of simple RNN layers, with skip connections between groups of 4 layers, LSUV initialization, and inputs encoded as an $\mathcal{N}(0,1)$ fixed embedding.

We seek to investigate the effect of \emph{depth} and the effect of \emph{bipolar activations} in such deeply stacked RNNs, and run a set of experiments to illuminate this.

The models were trained using the ADAM optimizer \citep{KingmaB14}, with a learning rate of 0.0002, a batch size of 128, on non-overlapping sequences of length 50. Since the training data does not cleanly divide by 50, for each epoch we choose a random crop of the data which does, as done in \citet{CooijmansBLC16}. We calculated the validation loss every 4th epoch, and divided the learning rate by 2 when the validation loss did not improve. No gradient clipping was used.

For regularization, we used a combination of various forms of dropout. We used standard dropout between layers, as done in \citet{PhamKL13, ZarembaSV14}. On the recurrent connections, we followed \emph{rnnDrop} \citep{Moon15} and used the same dropout mask for every timestep in a sequence. We adapted stochastic depth \citep{HuangSLSW16} to the recurrent setting, stochastically dropping entire blocks of 4 layers, replacing recurrent and non-recurrent connections with identity connections for a timestep (this was explored for single units, called \emph{Zoneout}, and also on single layers in \citet{KruegerMKPBKGBL16}). Unlike \citet{HuangSLSW16}, we did not do any rescaling of the droppable blocks at test time, since this would go against the goal of having unit variance on the output of each block. 

We look at model depths in the set $\{4, 8, 12, 24, 36\}$ and on activation functions ReLU and ELU and their bipolar versions BReLU and BELU. Since good performance on this dataset is highly dependent on regularization, in order to get a fair comparison of various depths and activation functions, we need to find good regularization parameters for each combination. We have dropout on recurrent connections, non-recurrent connections and on blocks of four layers, and thus have 3 separate dropout probabilities to consider. In order to limit this large parameter search space, we first do an exploratory search on dropout probabilities in the set $\{0, 0.025, 0.05\}$, for the depths 4 and 36 and functions ReLU and BELU. For both recurrent connections dropout and block dropout, we get best results with 0.025 dropout probability (for every model). However, we find that the ideal between-layer dropout probability decreases with model depth. We therefore freeze all other parameters, and consider between-layer dropout probabilities in the set $\{0.025, 0.05, 0.1\}$, for ReLU networks in depths 4 and 36. We use the optimal probabilities for each depth found here for the other activation functions. For the remaining depths we choose a probability between the best setting for 4 and 36 layers. For the best performing function at 36 layers (BELU), we also train deeper models of 48 and 144 layers.

To get results comparable with previously reported results, we constrained the number of parameters in each model to be about the same as for a network with 1x1000 LSTM units, approximately 4.75M parameters.

\begin{table}[h]
  \caption{Penn Treebank validation errors (BPC)}
  \begin{center}
  \label{table:penn-activations}
  \centering
  \begin{tabular}{lllllll}
    %\toprule
    ~        & dropout  & \#parameters & ReLU & BReLU & ELU & BELU  \\
    \midrule
    4 x 760  & 0.1 & {\raise.17ex\hbox{$\scriptstyle\sim$}}4.75M &  1.321 & 1.324 & DNC  & 1.318 \\
    8 x 540  & 0.075 & {\raise.17ex\hbox{$\scriptstyle\sim$}}4.75M &  1.331 & 1.320 & DNC  & 1.317 \\
    16 x 385 & 0.075 & {\raise.17ex\hbox{$\scriptstyle\sim$}}4.75M &  1.321 & 1.324 & DNC  & 1.317 \\
    24 x 314 & 0.075 & {\raise.17ex\hbox{$\scriptstyle\sim$}}4.75M &  1.349 & 1.334 & DNC  & 1.317 \\
    36 x 256 & 0.05 & {\raise.17ex\hbox{$\scriptstyle\sim$}}4.75M &  1.353 & 1.319 & DNC  & 1.311 \\
    48 x 222 & 0.05  & {\raise.17ex\hbox{$\scriptstyle\sim$}}4.75M &  -     & -     & -    & 1.320\\
    144 x 128 & 0.025  & {\raise.17ex\hbox{$\scriptstyle\sim$}}4.75M &  -     & -     & -    & 1.402 \\
    \bottomrule
  \end{tabular}
\end{center}
\end{table}

From Table~\ref{table:penn-activations} we can see that ReLU-RNN performed worse with increasing depth. With ELU-RNN, learning did not converge. The bipolar version of ELU avoids this problem, and its performance does not degrade with increasing depth up to 36 layers. Overall, the best validation BPC is achieved with the 36 layer BELU-RNN. Figure~\ref{fig:train_loss_comparison} shows the training error curves of the 36-layer RNN with each of the activation functions, and shows that the bipolar variants see a faster drop in training error.

We briefly explore substituting the SELU unit into the 36-layer RNN. There, as with the ELU, the training quickly diverges. This phenomenon occurs even with very low learning rates. However, if we substitute the SELU with its bipolar variant, training works again. A 36-layer BSELU-RNN converges to a validation error of 1.314 BPC, similar to BELU.

Some previously reported results on the test set is included in Table~\ref{penn-table}. The listed models have approximately the same number of parameters. We can see that the results for the 36 layer BELU-RNN is better than for the best reported results for non-gating architectures (DOT(S)-RNN), but also competitive with the best normalized and regularized LSTM architectures. Notably, the BELU-RNN outperforms the LSTM with recurrent batch normalization \citep{CooijmansBLC16}, where both mean and variance are normalized.

\begin{table}[h]
  \caption{Penn Treebank test error}
  \begin{center}
  \label{penn-table}
  \begin{tabular}{ll}
    %\toprule
    Network              & BPC \\
    \midrule
    Tanh + Zoneout \citep{KruegerMKPBKGBL16} & 1.52\\
    ReLU 1x2048  \citep{NIPS2016_6214}  & 1.47\\
    GRU + Zoneout \citep{KruegerMKPBKGBL16} & 1.41\\
    MI-RNN 1x2000 \citep{WuZZBS16} & 1.39\\
    DOT(S)-RNN  \citep{PascanuGCB13} & 1.386 \\
    LSTM 1x1000 \citep{KruegerMKPBKGBL16}   & 1.356 \\
    LSTM 1x1000 + Stoch. depth \citep{KruegerMKPBKGBL16} & 1.343\\
    LSTM 1x1000 + Recurrent BN  \citep{CooijmansBLC16} & 1.32\\
    LSTM 1x1000 + Dropout \citep{HaDL16}   & 1.312 \\
    LSTM 1x1024 + Rec. dropout \citep{SemeniutaSB16}& 1.301 \\
    LSTM 1x1000 + Layer norm \citep{HaDL16}   & 1.267 \\
    LSTM 1x1000 + Zoneout \citep{KruegerMKPBKGBL16} & 1.252\\
    Delta-RNN + Dropout \citep{OrorbiaMR17} & 1.251\\
    HM-LSTM 3x512 + Layer norm     \citep{ChungAB16}   & 1.24 \\
    HyperNetworks \citep{HaDL16}  & \textbf{1.233}\\

    \midrule
    BELU 36x256 & 1.270\\
    \bottomrule
  \end{tabular}
  \end{center}
\end{table}

\subsection{Character-level Text8}

Text8 \citep{Text8} is a simplified version of the Enwik8 corpus with a vocabulary of 27 characters. It contains the first 100M characters of Wikipedia from Mar. 3, 2006. We also here trained an RNN to predict the next character in the sequence. The dataset was split taking the first 90\% for training, the next 5\% for validation and the final 5\% for testing, in line with common practice. The test results reported is the test error for the epoch with lowest validation error.

The network architecture here was identical to the 36 layer network used in the Penn Treebank experiments, except that we used a larger layer size of 474. We used an initial learning rate of 0.00005, which was halved when validation error did not improve from one epoch to the next. To match previously reported results, we constrained the number of parameters to be 16.2M, about the same as for a 1x2000 LSTM network. We chose 0.01 dropout probability for stochastic depth, recurrent and non-recurrent dropout, and did not do a hyperparameter search on this dataset.

On this dataset, training diverges with both ReLU and ELU due to exploding activation dynamics. These problems do not occur with their bipolar variants.

\begin{table}[h]
  \caption{Text8 validation error [BPC]}
  \label{table:cifar}
  \begin{center}
  \begin{tabular}{lllll}
    %\toprule
    %& \multicolumn{2}{l}{Ordinary activation}   & \multicolumn{2}{l}{Bipolar activation}   \\
    %\cmidrule{2-5}
    Network          & ReLU   & BReLU  & ELU     & BELU \\
    \midrule
    36x474 RNN        & DNC & 1.399 & DNC  & 1.334\\
    \bottomrule
  \end{tabular}
  \end{center}
\end{table}

We compare the result on the test set with reported results obtained with approximately the same number of parameters. From Table~\ref{text8-table} we can see that the result for the 36 layer BELU-RNN improves upon the best reported result for non-gated architectures (Skipping-RNN).

\begin{table}[h]
  \caption{Text8 test error}
  \begin{center}
  \label{text8-table}
  \begin{tabular}{ll}
    %\toprule
    Network              & BPC  \\
    \midrule
    MI-Tanh 1x2048 \citep{WuZZBS16} &1.52  \\
    LSTM 1x2048 \citep{WuZZBS16} & 1.51  \\
    Skipping-RNN \citep{PachitariuS13}& 1.48  \\
    MI-LSTM 1x2048 \citep{WuZZBS16} & 1.44  \\
	LSTM 1x2000 \citep{CooijmansBLC16} & 1.43 \\
	LSTM 1x2000      \citep{KruegerMKPBKGBL16} & 1.408 \\
	mLSTM 1x1900      \citep{KrauseLMR16} & 1.40 \\
	LSTM 1x2000 + Recurrent BN \citep{CooijmansBLC16}  & 1.36\\
	LSTM 1x2000 + Stochastic depth \citep{KruegerMKPBKGBL16} & 1.343\\
	LSTM 1x2000 + Zoneout \citep{KruegerMKPBKGBL16} & 1.336\\
	Recurrent Highway Network   \citep{ZillySKS16} &\textbf{1.29}\\
	HM-LSTM 3x1024 + Layer norm   \citep{ChungAB16} &\textbf{1.29}\\
    \midrule
    BELU 36x474     	         & 1.423   \\
    \bottomrule
  \end{tabular}
  \end{center}
\end{table}

\subsection{Classification CIFAR-10}

To explore the effect of different bipolar activation functions for deep convolutional networks, we conducted some simple experiments on the CIFAR-10 dataset \citep{krizhevsky2009cifar} on some recent well performing architectures. We duplicated the network architectures of Oriented Response Networks (ORN) \citep{ZhouYQJ17} and Wide Residual Networks (WRN) \citep{ZagoruykoK16, HeZRS15}, except we removed batch normalization, and used LSUV initialization. We then compared the performance of these networks with and without bipolar activation functions.

We used the network variants with 28 layers, a widening factor of 10, and 30\% dropout, which gave the best results on CIFAR-10 in the expirements of \citet{ZagoruykoK16} and \citet{ZhouYQJ17}. We also duplicated their data preprocessing, using simple mean/std normalization of the images, and horizontal flipping and random cropping as data augmentation.

Removing batch normalization required us to lower the learning rate. For each network, we looked for the highest possible learning rate, starting at 0.08, and retrying with half the learning rate if the learning diverged. The learning rate was eased in over the first 2000 batches. The networks were trained for 200 epochs, where the learning rate was divided by five in epoch 120 and 160. Table \ref{table:cifar-tight} lists the test error of the last epoch for each run. The networks with bipolar activations allowed training with up to 64 times higher learning rates. As can be seen in Figure \ref{fig:orn_results}, the networks with bipolar activations saw a faster drop in training error, and achieved lower test errors. Note that neither setup beats the originally reported results for the networks with batch normalization (a test error of 2.98\% for ORN and 4.17\% for WRN).

\begin{table}[h]
  \caption{CIFAR-10 test error with moderate data augmentation [$\%$]}
  \label{table:cifar-tight}
  \begin{center}
  \begin{tabular}{lllll}
    %\toprule
    %& \multicolumn{2}{l}{Ordinary activation}   & \multicolumn{2}{l}{Bipolar activation}   \\
    %\cmidrule{2-5}
    Network          & ReLU   & BReLU  & ELU     & BELU \\
    \midrule
    OrientedResponseNet-28 (no BN, 30\% dropout) & 9.20 & 4.91 & 9.03 & 5.35 \\
    WideResNet-28 (no BN, 30\% dropout) & 9.78 & 6.03  & 7.69 & 6.12 \\
    \bottomrule
  \end{tabular}
  \end{center}
\end{table}

\begin{figure}
    \centering
	\includegraphics[width=1\textwidth]{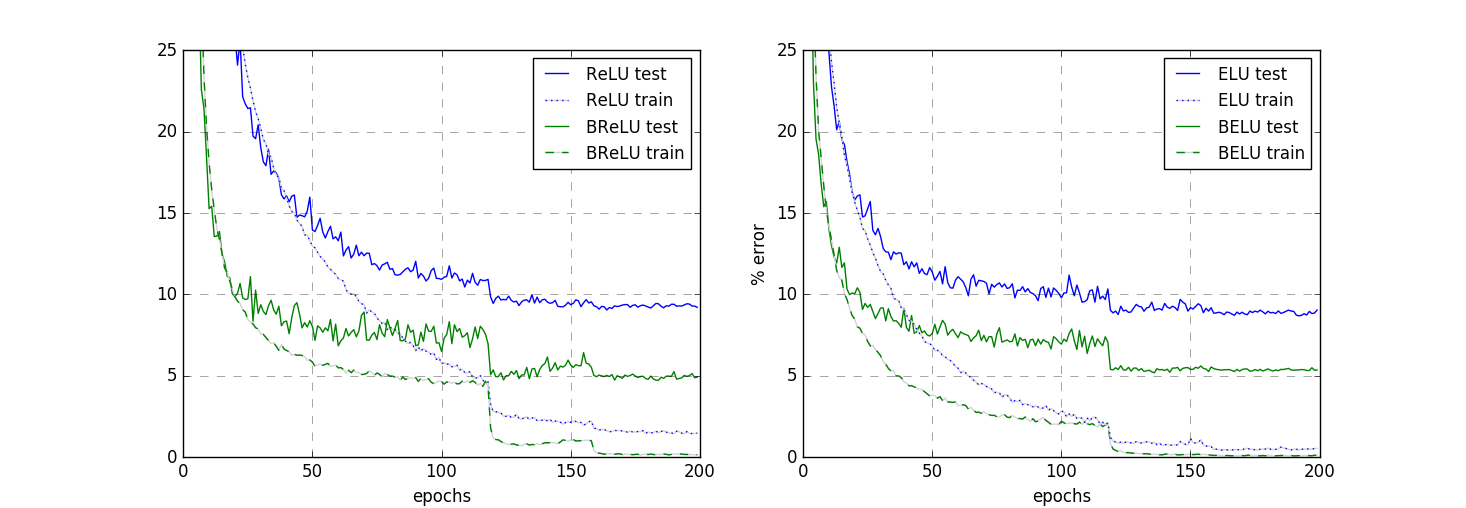}
  	\caption{Training and test error on CIFAR-10, with and without bipolar units, in a 28-layer Oriented Response Network without batch normalization.}
  	\label{fig:orn_results}
\end{figure}

\section{Conclusion}

We have introduced bipolar activation functions, as a way to pull the mean activation of a layer towards zero in deep neural networks. Through a series of experiments, we show that bipolar ReLU and ELU units can improve trainability in deeply stacked, simple recurrent networks and in convolutional networks. 

Deeply stacked RNNs with unbounded activation functions provide a challenging testbed for learning dynamics. We present empirical evidence that bipolarity helps trainability in this setting, and find that in several of the networks we trained, using bipolar versions of the activation functions was necessary for the networks to converge. These deeply stacked RNNs achieve test errors that improve upon the best previously reported results for non-gated networks on the Penn Treebank and Text8 character level language modeling tasks. Key ingredients to the model, in addition to bipolar activation functions, are residual connections, the depth of the model, LSUV initialization and proper regularization.

In our experiments on convolutional networks without batch normalization, we found that bipolar activation functions can allow for training with much higher learning rates, and that the resulting training process sees a much quicker fall in training error, and ends up with a lower test error than with their non-bipolar variants.

\small
\bibliography{brelu}
\bibliographystyle{iclr2018_conference}

\newpage
\appendix
\section{Smeared Gradients in Deeply Stacked RNNs}
\label{appendix:gradients}

\begin{figure}[h]
    \centering
	\includegraphics[width=1\textwidth]{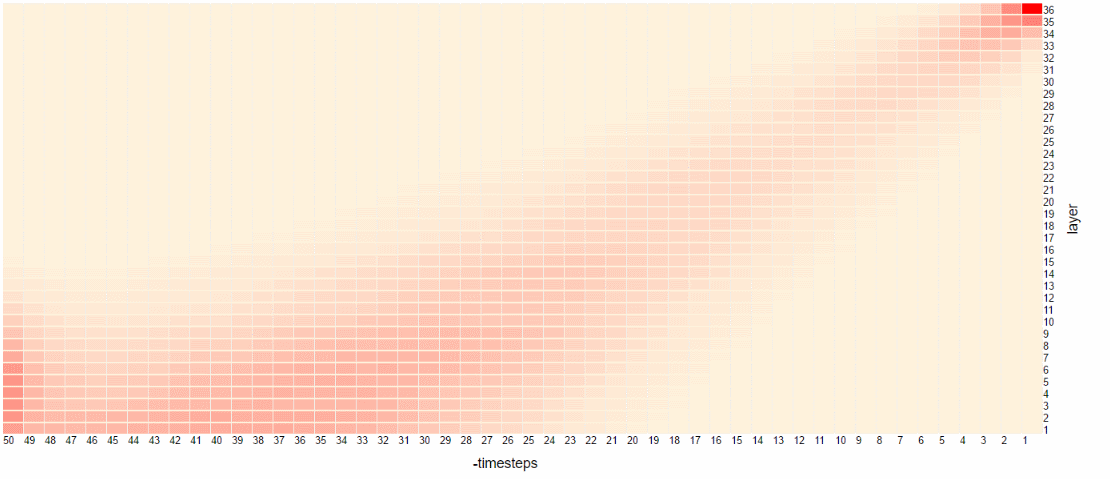}
	\includegraphics[width=1\textwidth]{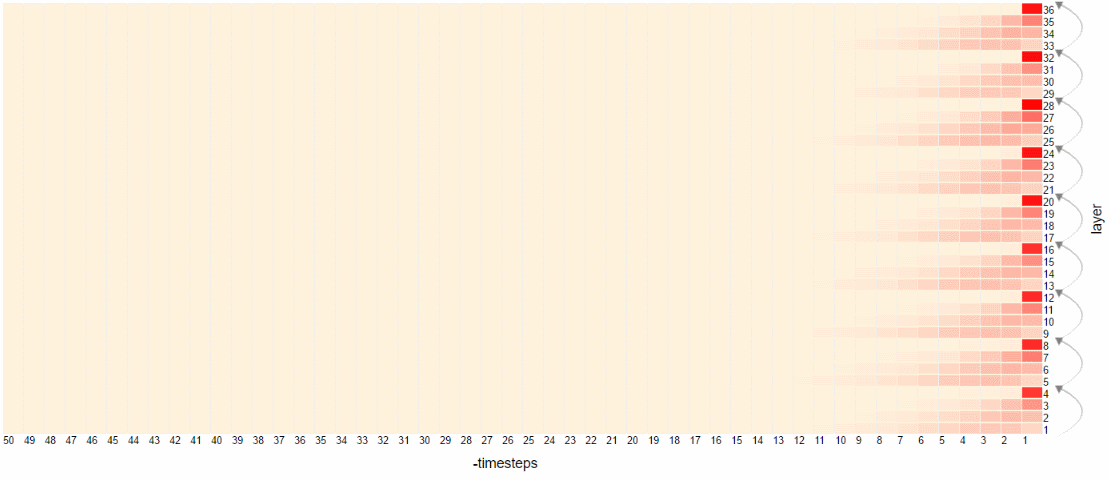}
  	\caption{The L2 norm of the gradient on the output of each layer as it propagates back through time. The gradient was calculated for the last timestep per batch and backpropagated through time. This was computed for the first 10 batches in the learning process, and averaged over those. Maximal redness indicates the maximum gradient magnitude. \textbf{Top:} Without skip connections. \textbf{Bottom:} With skip connections. Both networks are 36-layer vanilla RNNs, with 256 bipolar ELU units, LSUV-initialized, trained on character-level Penn Treebank.}
  	\label{fig:gradient_flow}
\end{figure}

While LSUV initialization allows training to work in deeper stacks of RNN layers, even with LSUV we get into trouble when the stacks get deep enough (see Figure \ref{fig:depthloss}).

Looking at the gradient flow reveals what the problem is. When the horizontal connections $W_i$ and the vertical connections $U_i$ are of approximately equal magnitude, the gradient is distributed in equal parts vertically and horizontally. The effect of this can be seen in Figure \ref{fig:gradient_flow} (top), where the gradient is smeared in a 45 degree angle away from its origin. This is undesirable: For example, in the 36 layer network, the error signal that reaches the first layer mostly relates to the inputs around 36 timesteps in the past. 

As can be seen in Figure \ref{fig:gradient_flow} (bottom), the problem is remedied by adding skip connections between groups of layers.

\begin{figure}[h]
    \centering
	\includegraphics[width=1.0\textwidth]{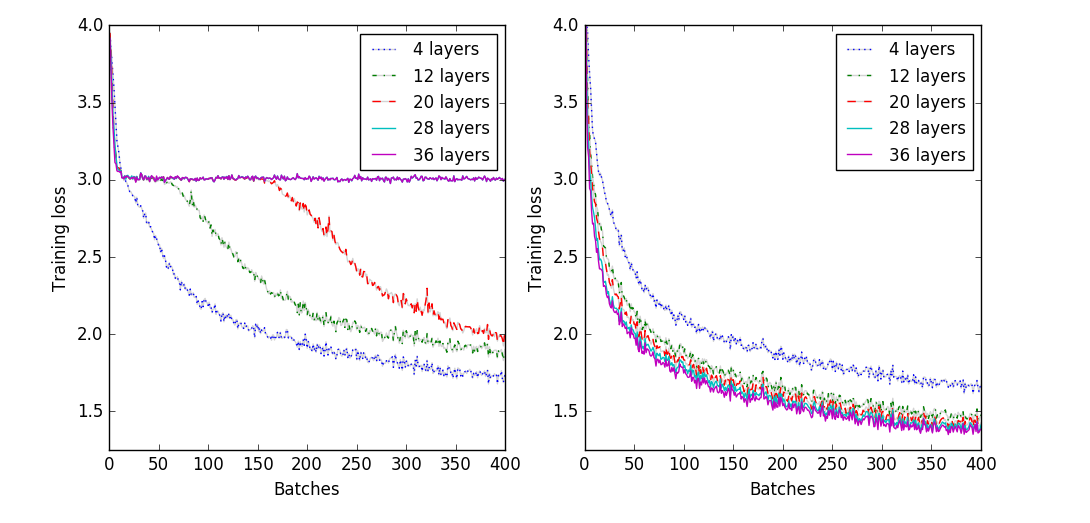}
  	\caption{Skip connections help learning in deeply stacked LSUV-initialized RNNs. \textbf{Left:} Without skip connections. \textbf{Right:} With skip connections connecting groups of four layers. Both plots show training loss for vanilla RNNs with 256 bipolar ELU units per layer, LSUV-initialized, trained on character-level Penn Treebank.}
    \label{fig:depthloss}
\end{figure}

\section{Training loss curves}
\label{appendix:gradients}

\begin{figure}
    \centering
	\includegraphics[width=1\textwidth]{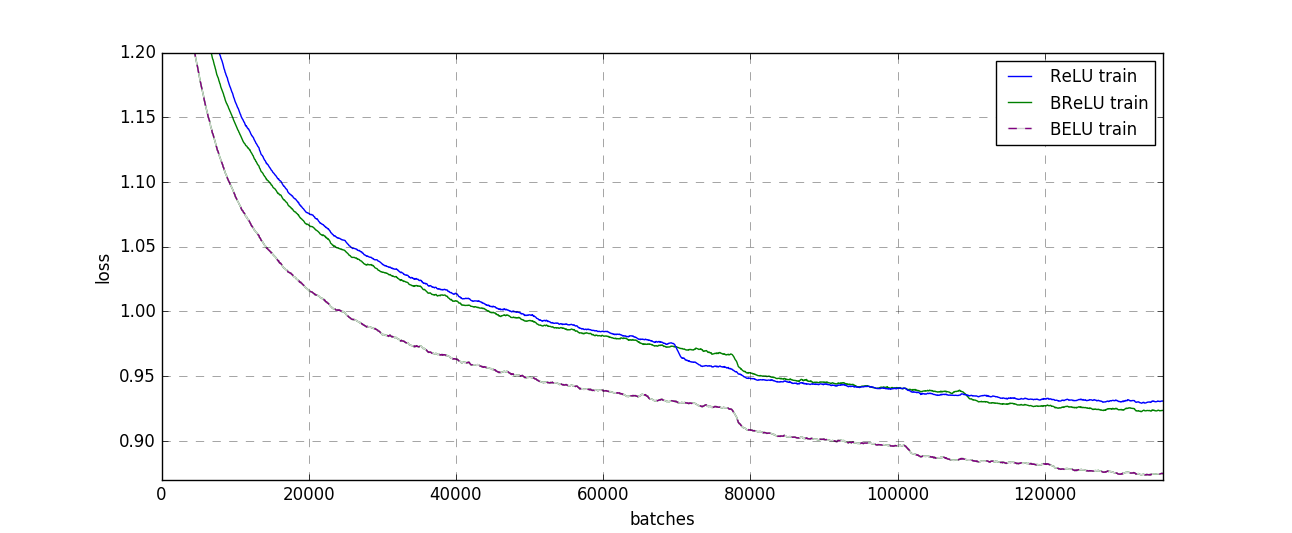}
  	\caption{Training loss with various activation functions, in 36-layer RNNs on Penn Treebank. The BReLU-RNN has a lower training error than ReLU-RNN at all times where the curves are comparable (until the learning rate is cut on the ReLU-RNN). The BELU-RNN has the lowest training error of all. With the ELU-RNN, training diverged quickly. }
  	\label{fig:train_loss_comparison}
\end{figure}

\end{document}